\documentclass[sigconf]{acmart}
\AtBeginDocument{%
  \providecommand\BibTeX{{%
    \normalfont B\kern-0.5em{\scshape i\kern-0.25em b}\kern-0.8em\TeX}}}

\usepackage{multirow}
\usepackage{multicol}
\usepackage{booktabs}
\usepackage{subfigure}
\usepackage{subcaption}
\usepackage{hhline}
\usepackage{color}
\usepackage{pifont}
\usepackage[ruled, vlined]{algorithm2e}
\usepackage{algpseudocode}
\usepackage{colortbl}
\usepackage{enumitem}
\usepackage{pifont}
\newtheorem{theorem}{Theorem}
\usepackage{hyperref}

\hypersetup{
    colorlinks=true,
    linkcolor=blue,
    filecolor=blue,      
    urlcolor=blue,
    citecolor=cyan,
}

\newcommand{\figref}[1]{Figure~\ref{#1}}
\newcommand{\tabref}[1]{Table~\ref{#1}}
\newcommand{\equref}[1]{Eq.~\ref{#1}}
\newcommand{\aloref}[1]{Algorithm~\ref{#1}}
\newcommand{\theref}[1]{Theorem~\ref{#1}}

\newcommand{\best}{\cellcolor[HTML]{DED0B6}}
\newcommand{\second}{\cellcolor[HTML]{FAEED1}}
\definecolor{color1}{RGB}{180, 96, 96} 
\definecolor{color2}{RGB}{33, 70, 199} 

\settopmatter{printacmref=true}
\begin{document}
\title{Rethinking Fair Graph Neural Networks from Re-balancing}

\makeatletter
\def\authornotetext#1{
	\g@addto@macro\@authornotes{%
	\stepcounter{footnote}\footnotetext{#1}}%
}
\makeatother

\copyrightyear{2024}
\acmYear{2024}
\setcopyright{acmlicensed}\acmConference[KDD '24]{Proceedings of the 30th ACM SIGKDD Conference on Knowledge Discovery and Data Mining}{August 25--29, 2024}{Barcelona, Spain}
\acmBooktitle{Proceedings of the 30th ACM SIGKDD Conference on Knowledge Discovery and Data Mining (KDD '24), August 25--29, 2024, Barcelona, Spain}
\acmPrice{}
\acmDOI{10.1145/3637528.3671826}
\acmISBN{979-8-4007-0490-1/24/08}

\author{Zhixun Li}
\affiliation{%
  \institution{The Chinese University of Hong Kong}
    \country{Hong Kong SAR, China}
  }
\email{zxli@se.cuhk.edu.hk}

\author{Yushun Dong}
\affiliation{%
  \institution{University of Virginia}
    \country{Charlottesville, USA}
  }
\email{yd6eb@virginia.edu}

\author{Qiang Liu}
\affiliation{%
  \institution{Institute of Automation, Chinese Academy of Sciences}
    \country{Beijing, China}
  }
\email{qiang.liu@nlpr.ia.ac.cn}

\author{Jeffrey Xu Yu}
\authornote{Corresponding author}
\affiliation{%
  \institution{The Chinese University of Hong Kong}
    \country{Hong Kong SAR, China}
  }
\email{yu@se.cuhk.edu.hk}

\begin{abstract}

Driven by the powerful representation ability of Graph Neural Networks (GNNs), plentiful GNN models have been widely deployed in many real-world applications. Nevertheless, due to distribution disparities between different demographic groups, fairness in high-stake decision-making systems is receiving increasing attention. Although lots of recent works devoted to improving the fairness of GNNs and achieved considerable success, they all require significant architectural changes or additional loss functions requiring more hyper-parameter tuning. Surprisingly, we find that simple re-balancing methods can easily match or surpass existing fair GNN methods. We claim that the imbalance across different demographic groups is a significant source of unfairness, resulting in imbalanced contributions from each group to the parameters updating. However, these simple re-balancing methods have their own shortcomings during training. In this paper, we propose FairGB, \underline{Fair} \underline{G}raph Neural Network via re-\underline{B}alancing, which mitigates the unfairness of GNNs by group balancing. Technically, FairGB consists of two modules: counterfactual node mixup and contribution alignment loss. Firstly, we select counterfactual pairs across inter-domain and inter-class, and interpolate the ego-networks to generate new samples. Guided by analysis, we can reveal the debiasing mechanism of our model by the causal view and prove that our strategy can make sensitive attributes statistically independent from target labels. Secondly, we reweigh the contribution of each group according to gradients. By combining these two modules, they can mutually promote each other. Experimental results on benchmark datasets show that our method can achieve state-of-the-art results concerning both utility and fairness metrics. Code is available at \url{https://github.com/ZhixunLEE/FairGB}.

\end{abstract}


\begin{CCSXML}
<ccs2012>
<concept>
<concept_id>10010147.10010257</concept_id>
<concept_desc>Computing methodologies~Machine learning</concept_desc>
<concept_significance>500</concept_significance>
</concept>
</ccs2012>
\end{CCSXML}

\ccsdesc[500]{Computing methodologies~Machine learning}

\keywords{Graph Neural Networks; Fairness; Re-balancing}

\maketitle
\section{Introduction}

\begin{figure}[t]
\centering
\includegraphics[width=\linewidth]{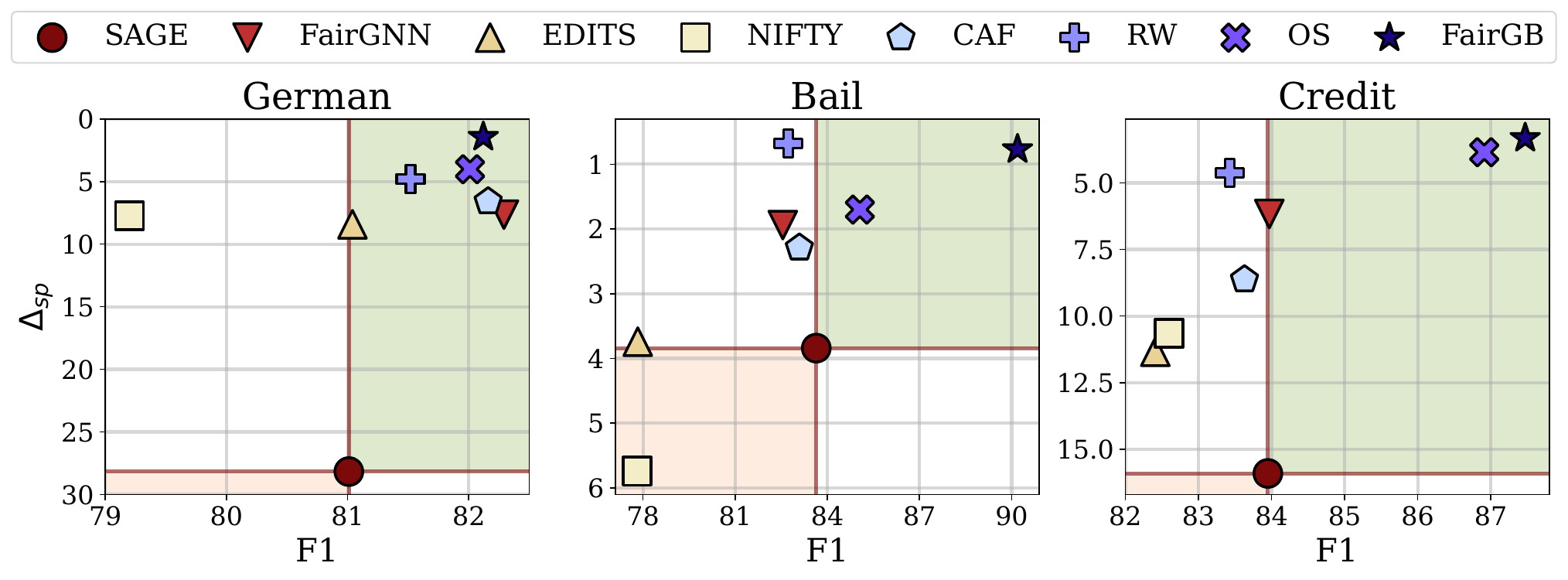}
\caption{The F1-$\Delta_{sp}$ trade-off on German, Bail, and Credit datasets. "RW" denotes re-weighting and "OS" denotes over-sampling. The light green region indicates the model outperforms the vanilla model in both utility and fairness, while the light red region represents the opposite.}
\label{fig:compare}
\vspace{-0.3cm}
\end{figure}

With the rapid development of deep learning, Graph Neural Networks (GNNs) have been widely used in dealing with non-Euclidean graph-structured data and gained a deeper understanding to help us perform predictive tasks, such as molecular property prediction and etc \cite{chen2024uncovering, wieder2020compact, zhang2024graphlta, li2024zerog, zhang2024graph, wang2024heterophilic}. However, potential bias in datasets will lead the neural networks to favor the privileged groups, for example, if the historical data reflects a bias towards loans for individuals from higher-income communities, the model may learn to associate high income with loan approval, leading to the unjustified denial of loans for applicants from lower-income communities even if they are creditworthy. Therefore, fairness in high-stake automatic decision-making systems, \emph{e.g.}, medication recommendation \cite{shang2019gamenet}, fraud detection \cite{li2022devil, neo2024towards}, and credit risk prediction \cite{yeh2009comparisons} have been receiving increasing attention in recent years.

Close to the heels of the wide application of GNNs, there are various fair GNN approaches have been proposed to improve fairness without unduly compromising utility performance \cite{dai2021say, dong2022edits, wang2022improving, agarwal2021towards, ma2022learning,dong2021individual,dong2023fairness,dong2023reliant,zhang2023adversarial}. The two most typical strategies of fair GNNs are imposing the fairness consideration as a regularization term during optimization \cite{dai2021say, wang2022improving, guo2023towards,song2022guide} or modifying the original training data with the assumption that fair data would result in a fair model \cite{dong2022edits, rahman2019fairwalk}. However, previous works focus on eliminating information about sensitive attributes but overlook the fact that due to attribute imbalance, underprivileged groups with fewer training samples are underrepresented compared to the privileged groups with more training samples. Inspired by recent work in maximizing worst-group-accuracy \cite{idrissi2022simple}, we surprisingly observe that simple group-wise re-balancing methods (\emph{e.g.}, re-sampling and re-weighting) can easily achieve competitive or even superior results compared to existing state-of-the-art fair GNN models with no additional hyper-parameters (as shown in \figref{fig:compare}). Data re-balancing methods are popular within the class imbalance literature, but we focus on balancing each group in this paper, which we refer to as group-wise re-balancing. However, these simple re-balancing methods have their own shortcomings on the imbalanced graph-structured datasets. (1) \emph{\textbf{Re-sampling:}} Down- and over-sampling will mitigate imbalance distribution by dropping or duplicating training samples. However down-sampling will lose a lot of beneficial information from the training set, jeopardizing the utility of the model. While over-sampling simply repeats minority samples will cause an over-fitting problem and hard to connect adjacent edges of newly generated nodes because of non-iid characteristics of graphs. (2) \emph{\textbf{Re-weighting:}} re-weighting methods apply penalties according to the quantity of groups and assign large weights to minor groups. However, up-weighing nodes in minor groups may also result in the over-fitting problem, and inevitably increase the false positive cases for major nodes \cite{song2022tam}.

Several works attempt to alleviate the class imbalance in the node classification scenario \cite{park2022graphens, song2022tam, liu2023survey, li2023graphsha}. For instance, GraphENS \cite{park2022graphens} injects the whole synthesized ego-networks for minor class. TAM \cite{song2022tam} designs a node-wise logit adjustment method, which adaptively adjusts the margin accordingly based on local topology. However they did not take into account sub-groups within the classes, and to the best of our knowledge, there has been no work addressing unfairness in GNNs from the perspective of re-balancing.

To address the above problems, we propose a novel fair GNN model, \underline{Fair} \underline{G}raph Neural Networks via re-\underline{B}alancing, FairGB for short. Our proposed model can be divided into two modules: Counterfactual Node Mixup (CNM) and Contribution Alignment Loss (CAL). Specifically, we first select inter-domain (with the same target labels and different sensitive attributes) and inter-class (with different target labels and the same sensitive attributes) nodes for each training sample as counterexamples. Then we interpolate node attributes and neighbor distributions of counterfactual pairs and inject newly synthesized ego-networks to generate a balanced augmented graph. We have performed theoretical analysis over causal and statistical views, which serves as a solid mathematical foundation for the effectiveness of debiasing.  Secondly, because the importance of each training sample varies, achieving balance solely based on quantity is not sufficient. To further improve fairness, we propose a re-weighting method, Contribution Alignment Loss, which can balance the contribution of each group according to the gradients. The weights can be flexibly combined with CNM, thus we can view FairGB as a hybrid model, where CNM is equivalent to re-sampling, CAL is a re-weighting strategy. They mutually reinforce each other, helping to alleviate the issues encountered by the simple re-balancing methods mentioned above. Our contributions can be listed as follows:
\begin{itemize}[leftmargin=*]
    \item \textbf{Preliminary Analysis.} We find that simple re-balancing methods can easily achieve competitive or superior results compared to existing state-of-the-art fair GNN models. And we provide a new perspective to analyze the fairness in graph learning.
    \item \textbf{Algorithm Design.} We propose a novel approach, namely FairGB, that can effectively improve performance via re-balancing methods with only one additional hyper-parameter. And we theoretically prove that our approach can achieve the debiasing effect.
    \item \textbf{Experimental Evaluation.} We conduct extensive experiments, and the results demonstrate that FairGB achieves superior performance of utility and fairness. Meanwhile, we observe that the decision boundaries of target labels and sensitive attributes are roughly orthogonal, which indicates they are independent.
\end{itemize}
\section{Preliminary}

\subsection{Notations and Problem Statements}
Given an attributed graph $\mathcal{G}=(\mathcal{V},\mathbf{A},\mathbf{X})$, where $\mathcal{V}=\{v_1,v_2,\ldots,v_N\}$ is the set of nodes; $\mathbf{A}\in\mathbb{R}^{N\times N}$ is the adjacency matrix, $N$ is the number of nodes, if $v_i$ and $v_j$ are connected, $\mathbf{A}_{ij}=1$, otherwise $\mathbf{A}_{ij}=0$; $\mathbf{X}=[\mathbf{x}_1,\mathbf{x}_2,\ldots,\mathbf{x}_N]\in\mathbb{R}^{N\times D}$ is the node feature matrix, each node $v_i$ is associated with a $D$-dimensional node feature vector $\mathbf{x}_i$. The low-dimensional representations of nodes $\mathbf{Z}=[\mathbf{z}_1,\mathbf{z}_2.\ldots, \mathbf{z}_N]\in\mathbb{R}^{N\times d}$ are derived from graph encoder $g(\cdot):\mathbb{R}^{N\times D}\times\mathbb{R}^{N\times N}\rightarrow\mathbb{R}^{N\times d}$. On the top of the graph encoder, there is a classifier head $h(\cdot):\mathbb{R}^{N\times d}\rightarrow\mathbb{R}^{N\times C}$ to obtain the probability of each class, where $C$ is the number of classes. Combining graph encoder $g(\cdot)$ and classifier head $h(\cdot)$, we can acquire graph model $f_\theta=g\circ h(\cdot)$, where $\theta$ is the learnable parameters.

In fairness learning or debiased learning, each labeled sample is a triad, $(v,y,s)\sim\mathcal{V}\times\mathcal{Y}\times\mathcal{S}$, with $y$ being the ground-truth label and $s$ being the sensitive attributes. If the nodes have the same target label $y$ and sensitive attribute $s$, we define them as a demographic group $\mathcal{D}_{y,s}$, and the quantity of the group is $|\mathcal{D}_{y,s}|$. The goal of fairness learning is that the model will not be affected by the sensitive features, resulting in the bias of predicted results $\hat{y}$. Assume there is a binary classification task, target label $\mathcal{Y}=\{1, 0\}$, sensitive attribute $\mathcal{S}=\{1, 0\}$. There are two corresponding metrics to evaluate fairness.

\subsubsection{Demographic Parity} If the predicted result $\hat{y}$ is independent of sensitive attributes, \emph{i.e.}, $\hat{y}\bot s$, then we can consider demographic parity is achieved. The formula for this criterion is as follows:
\begin{equation}
    P(\hat{y}=1|s=0)=P(\hat{y}=1|s=1)
\end{equation}
If a model satisfies demographic parity, the acceptance rate of different protected groups is the same.

\subsubsection{Equalized Odds} If the predicted results and sensitive attributes are independent conditional on the ground-truth label, \emph{i.e.}, $\hat{y}\bot s|y$, then we consider equalized odds is achieved. The formula for it is as follows:
\begin{equation}
    P(\hat{y}=1|s=1,y=1)=P(\hat{y}=1|s=0,y=1)
\end{equation}
If a model satisfies equalized odds, the TPR (True Positive Rate) and FPR (False Positive Rate) for the two protected groups are the same.

\begin{figure*}[t]
\centering
\includegraphics[width=0.93\textwidth]{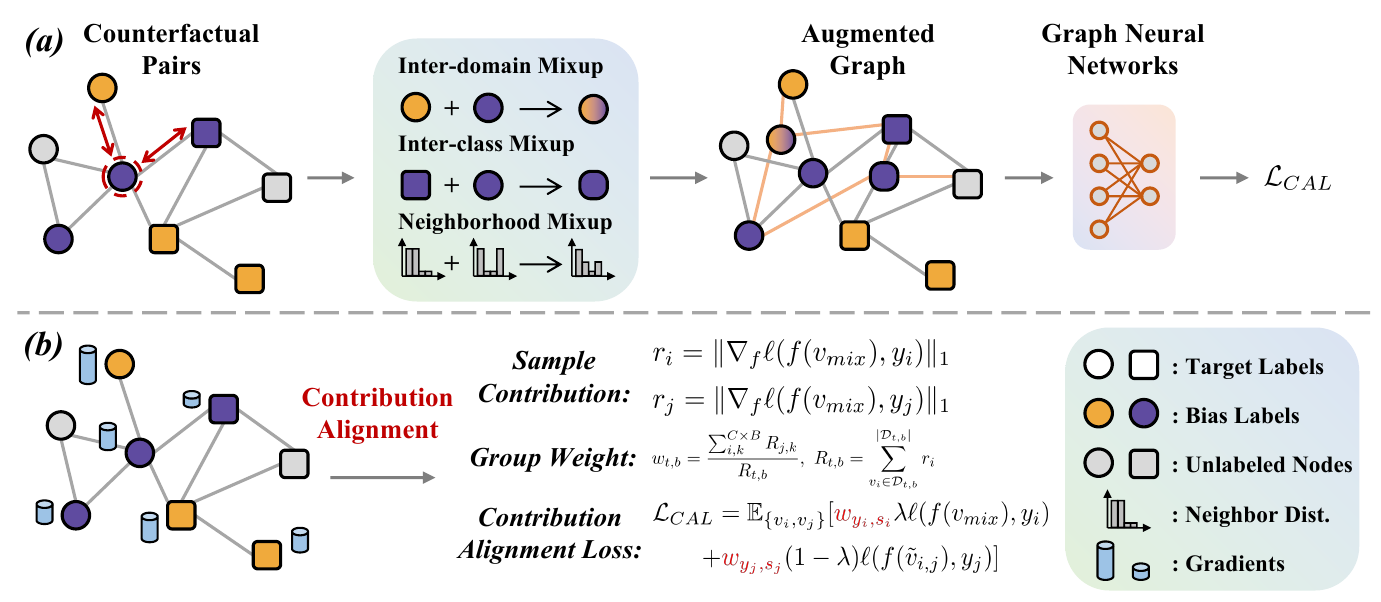}
\caption{The overview of proposed FairGB.}
\label{fig:architecture}
\end{figure*}

\subsection{Graph Neural Networks}
Most existing GNNs follow the message-passing paradigm which contains message aggregation and feature update, such as GCN \cite{kipf2016semi} and GAT \cite{velivckovic2017graph}. They generate node representations by iteratively aggregating information of neighbors and updating them with non-linear functions. The forward process can be defined as:
\begin{equation}
    \mathbf{z}_i^{(l)}=\mathbf{U}\Big(h_i^{(l-1)}, \mathbf{M}(\{\mathbf{z}_i^{(l-1)}, \mathbf{z}_j^{(l-1)}|v_j\in\mathcal{N}_i\})\Big)
\end{equation}
where $\mathbf{z}_i^{(l)}$ is the feature vector of node $i$ in the $l$-th layer, and $\mathcal{N}_i$ is a set of neighbor nodes of node $i$. $\mathbf{M}$ denotes the message passing function of aggregating neighbor information, $\mathbf{U}$ denotes the update function with central node feature and neighbor node features as input. By stacking multiple layers, GNNs can aggregate messages from higher-order neighbors.
\section{METHODOLOGY}

In this section, we will give a detailed description of FairGB. An illustration of its framework is shown in \figref{fig:architecture}. In the Counterfactual Node Mixup (CNM), we select counterfactual pairs for each training sample and conduct inter-domain and inter-class mixup with whole ego-networks. In the Contribution Alignment Loss (CAL), we further improve the group-wise balance by re-weighting each group according to the gradients. Next, we will provide a theoretical analysis and introduce the details of each module.

\subsection{Counterfactual Node Mixup}

Recently, the incorporation of causal learning techniques into GNNs has ignited a plethora of groundbreaking studies  \cite{jiang2023survey, guo2023towards, fan2022debiasing, sui2022causal}. This is attributed to the fact that addressing trustworthiness concerns is more effectively achieved by capturing the inherent causality in the underlying data, as opposed to merely relying on superficial correlations. In this work, we present a causal view of the union of the graph data generation and the GNNs' prediction process as a Structure Causal Model \cite{peters2017elements} (as shown in \figref{fig:analysis}(a)). We illustrate the causal relationships among six variables in the node classification problem: unobservable causal variable $C$, unobservable sensitive (bias) variable $S$, observable node attributes $\mathbf{X}$, observable topology $\mathbf{A}$, node embedding $\mathbf{Z}$, and ground-truth label $\mathbf{Y}$. In the graph data generation process, $C\rightarrow\mathbf{X}\leftarrow S$ and $C\rightarrow\mathbf{A}\leftarrow S$ demonstrate that two variables (causal variable $C$ and sensitive variable $S$) construct two components of observable contextual subgraphs (node attributes $\mathbf{X}$ and topology $\mathbf{A}$), which is different from \emph{i.i.d.} data only consider attributes. $\mathbf{X}\rightarrow\mathbf{Z}\leftarrow\mathbf{A}$ and $\mathbf{Z}\rightarrow\mathbf{Y}$ indicate existing GNNs produce representations and predictions based on observable contextual subgraphs. $C\dashleftrightarrow S$ denotes the spurious correlation between $C$ and $S$.

Inspired by \citet{fan2022debiasing}, we analyze the SCM according to $d$-connection theory \cite{pearl2009causality} (\emph{two variables are dependent if they are connected by at least one unblocked path}). Thus we can find three paths between sensitive variable $S$ and group-truth label $\mathbf{Y}$:
\begin{itemize}[leftmargin=*]
    \item $S\rightarrow\mathbf{X}\rightarrow\mathbf{Z}\rightarrow\mathbf{Y}$ and $S\rightarrow\mathbf{A}\rightarrow\mathbf{Z}\rightarrow\mathbf{Y}$: Because the existing graph neural networks make prediction for a node based on its contextual subgraph, sensitive variable $S$ will influence the final prediction not only through node attributes $\mathbf{X}$ but also topology $\mathbf{A}$ \cite{li2024gslb}. This makes fairness in graph machine learning more complex compared to other modalities (\emph{e.g.}, images and languages). As a result, if we want to sever all unblocked paths between $S$ and $\mathbf{Y}$, we need to debias both $\mathbf{X}$ and $\mathbf{A}$.
    
    \item $S\dashleftrightarrow C\rightarrow\mathbf{Y}$: we want to sever the connection between $S$ and $C$, so we utilize inter-domain and inter-class mixup. Intuitively, we interpolate samples with the same target label but different sensitive attributes in inter-domain mixup, and then the model can focus on class-specific information and learn domain invariant features. Besides, we also interpolate samples with the same sensitive attribute and different target labels in inter-class mixup, which can smooth the decision surface and alleviate the dependency on bias information.
\end{itemize}

Furthermore, we can also explain counterfactual mixup is equivalent to re-sampling methods. As shown in \figref{fig:analysis}(b), we observe that the original group distribution is imbalanced, which could lead to unfairness during training. However, in the inter-domain mixup, counterfactual samples can balance the bias distribution within each class (\emph{i.e.}, $P(Y=y|S=i)=P(Y=y|S=j), \forall i,j\in S$). In the inter-class mixup, counterfactual samples can make the bias distribution consistent within each class (\emph{i.e.}, $P(S=s|Y=i)=P(S=s|Y=j), \forall i,j\in Y$). The following theorem builds a theoretical analysis of the debiasing capability of counterfactual mixup.

\begin{figure}
    \centering
    \begin{minipage}{0.32\linewidth}
        \centering
        \includegraphics[width=0.96\linewidth]{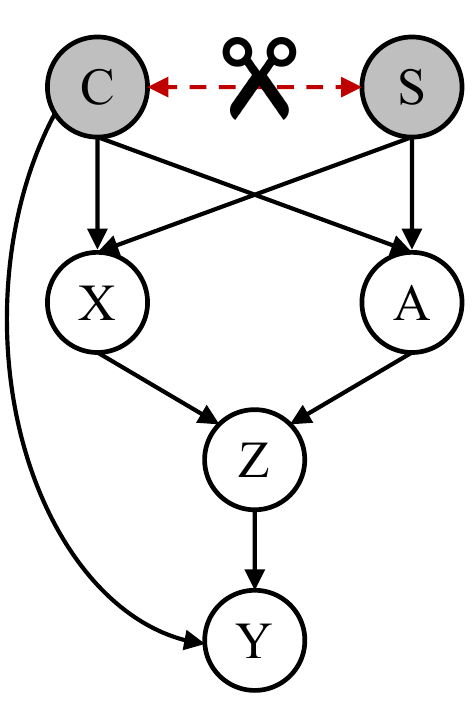}
        \caption*{(a) Causal View.}
    \end{minipage}
    \begin{minipage}{0.66\linewidth}
        \centering
        \includegraphics[width=0.93\linewidth]{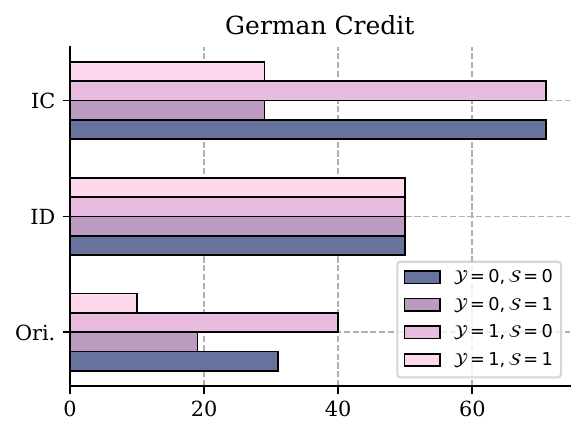}
        \caption*{(b) Group Distribution.}
    \end{minipage}
    \caption{Analysis of the proposed FairGB. (a) SCM of the union of the graph data generation and the existing GNNs' prediction process. (b) Statistics of the occurrences of samples in each group per epoch in the German dataset. "IC" denotes inter-class mixup, "ID" denotes inter-domain mixup.}
    \label{fig:analysis}
\end{figure}

\begin{theorem}
    Let $Y$ and $S$ be target labels and sensitive attributes. Balanced and consistent bias distribution within each class can make $S$ statistically independent from $Y$, i.e., $P(Y=y|S=s)=P(Y=y)$.
    \label{the:1}
\end{theorem}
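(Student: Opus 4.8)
The plan is to reduce the statement to an elementary application of the law of total probability, treating the ``balanced'' and ``consistent'' properties produced by the two mixup operations as algebraic constraints on the conditional tables. First I would fix an arbitrary label value $y$ and rewrite the balanced hypothesis from the inter-domain mixup, $P(Y=y\mid S=i)=P(Y=y\mid S=j)$ for all $i,j$, as the single statement that $P(Y=y\mid S=s)=p_y$ for a constant $p_y$ that does not depend on $s$. The core step is then to marginalize over the sensitive attribute: expanding $P(Y=y)=\sum_{s}P(Y=y\mid S=s)\,P(S=s)$, substituting the constant $p_y$, and using $\sum_s P(S=s)=1$ gives $P(Y=y)=p_y$. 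Combining this with $P(Y=y\mid S=s)=p_y$ yields $P(Y=y\mid S=s)=P(Y=y)$, which is exactly the claimed independence.

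To let the two mixup operations play symmetric roles I would then give the mirror-image argument from the ``consistent'' hypothesis $P(S=s\mid Y=i)=P(S=s\mid Y=j)=q_s$ of the inter-class mixup: marginalizing over $y$ gives $P(S=s)=q_s\sum_y P(Y=y)=q_s$, so $P(S=s\mid Y=y)=P(S=s)$, and since independence is symmetric this again delivers $P(Y=y\mid S=s)=P(Y=y)$. Presenting both directions makes clear that inter-domain and inter-class mixup each collapse one of the two factorizations of the joint distribution $P(Y=y,S=s)=P(Y=y\mid S=s)\,P(S=s)=P(S=s\mid Y=y)\,P(Y=y)$ onto the independent form.

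The point I expect to require the most care is not the algebra but the justification that the augmented distribution actually satisfies these constants: each hypothesis on its own already forces independence, so the genuine content is establishing that counterfactual mixup equalizes the group-conditional tables in the first place. I would therefore tie the constants $p_y$ and $q_s$ back to the resampled group counts $|\mathcal{D}_{y,s}|$ visualized in the group-distribution analysis, verifying that the synthesized occurrences make every row of the conditional table identical, so that $p_y$ and $q_s$ are well defined. Values of $s$ with $P(S=s)=0$ carry no mass and can be excluded by convention, so they present no obstruction to the marginalization step.
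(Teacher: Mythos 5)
Your proposal is correct and takes essentially the same route as the paper: both arguments apply the law of total probability to the constant conditional rows, showing that the inter-domain condition forces $P(Y=y\mid S=s)=P(Y=y)$ and the inter-class condition forces $P(S=s\mid Y=y)=P(S=s)$. The only cosmetic difference is that the paper chains the second condition through Bayes' theorem to recover $P(Y=y\mid S=s)=P(Y=y)$, whereas you invoke the symmetry of independence and correctly observe that each hypothesis on its own already suffices --- a redundancy the paper's presentation leaves implicit.
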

\begin{proof}
    \ding{182}\textbf{Inter-domain mixup}: If $P(Y=y|S=i)=P(Y=y|S=j), \forall i,j\in S$, given $s\in S$ then:
    \begin{equation}
        P(Y=y)=\sum_{s\in S}P(Y=y|S=s)P(S=s),
        \label{equ:1}
    \end{equation}
    Based on the premise condition, we can rewrite \equref{equ:1} as:
    \begin{equation}
        P(Y=y)=P(Y=y|S=s)\sum_{s'\in S}P(S=s')=P(Y=y|S=s)
    \end{equation}
    \ding{183}\textbf{Inter-class mixup}: If $P(S=s|Y=i)=P(S=s|Y=j), \forall i,j\in Y$, given $y\in Y$ then:
    \begin{equation}
        P(S=s)=\sum_{y\in Y}P(S=s|Y=y)P(Y=y)
        \label{equ:2}
    \end{equation}
    Based on the premise condition, we can rewrite \equref{equ:2} as:
    \begin{equation}
        P(S=s)=P(S=s|Y=y)\sum_{y'\in Y}P(Y=y')=P(S=s|Y=y)
    \end{equation}
    Subsequently, using Bayesian probability we can get:
    \begin{align}
        P(Y=y|S=s)&=\frac{P(S=s|Y=y)P(Y=y)}{P(S=s)}\\
        &=\frac{P(S=s)P(Y=y)}{P(S=s)}\\
        &=P(Y=y)
    \end{align}
    Thus, we have completed the proof of \theref{the:1}.
\end{proof}

Although some literature provided similar theoretical analysis based on the statistical view \cite{qraitem2023bias, teney2023selective}, they did not conduct a detailed analysis of the debiasing mechanism of balanced and consistent bias distribution within each class.

Next, we are going to introduce the detailed process of counterfactual node mixup. Specifically, for each training node $v_i$, we randomly select a node $v_j$ that has the same target label and a different sensitive attribute with $v_i$ ($y_i=y_j, s_i\neq s_j$) as an inter-domain counterexample. In the same way, we randomly select a node $v_j$ that has a different target label and the same sensitive attribute with $v_i$ ($y_i\neq y_j, s_i=s_j$) as an inter-class counterexample. Here, we introduce a hyper-parameter $\eta$, which is responsible for controlling the ratio between two kinds of counterexamples. Then we perform linear interpolation on the node attributes:
\begin{equation}
    \mathbf{x}_{mix}=\lambda \mathbf{x}_i + (1-\lambda)\mathbf{x}_j,\ y_{mix}=\lambda y_i + (1-\lambda)y_j,
    \label{equ:node}
\end{equation}
where $\lambda\in [0,1]$ is the interpolation ratio that is sampled from a Beta distribution. After generating new node attributes, we need to insert these nodes into the original graph. According to the analysis above, we also perform mixup on the contextual subgraph structure. We define the neighbor distribution of node $v_i$ as $p_\mathcal{N}(v_i)$ where $p(v_k|v_i)=\frac{1}{|\mathcal{N}_i|}$, if $\mathbf{A}_{ik}=1$, and $p(v_k|v_i)=0$ otherwise. Then we directly interpolate the neighbor distribution of counterfactual pairs using the same $\lambda$:
\begin{equation}
    p_\mathcal{N}(v_{mix})=\lambda p_\mathcal{N}(v_i) + (1-\lambda)p_\mathcal{N}(v_j),
    \label{equ:neighbor}
\end{equation}
Thus we can obtain newly generated unbiased ego-networks. However, these ego-networks have very dense structures, which could violate the original degree distribution and result in the phenomenon of out-of-distribution \cite{tang2020investigating, liu2021tail}. Therefore, we sample the neighbors according to the original degree distribution and inject newly generated unbiased ego-networks into the original graph to construct an augmented graph $\mathcal{G}_{aug}$.

\subsection{Contribution Alignment Loss}

Although counterfactual node mixup is equivalent to re-sampling to some extent, varying levels of learning difficulty and the different positions of labeled nodes \cite{chen2021topology} lead to inconsistent contributions of training nodes. Simply balancing the group distribution in terms of quantity does not effectively address the problem. To further balance the contribution of each group and enhance fairness, we align the gradients of each group.

Since FairGB first generates mixed nodes through counterfactual node mixup and calculates the loss on these mixed nodes, it is difficult for us to determine which group the mixed nodes actually belong to. However, we can acquire the contributions of mixed nodes to each group by rewriting the loss function. First, we claim that the objective loss function of counterfactual mixup is as follows:
\begin{align}
    \mathcal{L}&=\mathbb{E}_{\{v_i,v_j\}}[\ell(f(v_{mix}), y_{mix})],\\
    &=\mathbb{E}_{\{v_i,v_j\}}[\lambda\ell(f(v_{mix}), y_i) + (1-\lambda)\ell(f(v_{mix}), y_j)],\label{equ:loss}
\end{align}
where $\ell$ is Cross-Entropy loss. Then we can obtain two contributions of each mixed node according to the gradients:
\begin{align}
    r_i=\lVert\nabla_f\ell(f(v_{mix}), y_i)\rVert_1,\ r_j=\lVert\nabla_f\ell(f(v_{mix}), y_j)\rVert_1,
    \label{equ:gradient}
\end{align}
In essence, $r_i$ and $r_j$ are the contributions generated from mixed two nodes $v_i$ and $v_j$. We can easily identify which group are $v_i$ and $v_j$ belong to. Therefore, the contribution of each group $R_{t,b}$ is the sum of sample contributions (both original samples and counterexamples), $R_{t,b}=\sum_{v_i\in\mathcal{D}_{t,b}}^{|\mathcal{D}_{t,b}|}r_i$. Based on $R_{t,b}$, we can compute the weight of each group $w_{t,b}=\sum_{j,k}^{C\times B}R_{j,k}/R_{t,b}$ to balance the contributions, where $B$ is the number of sensitive attributes. After obtaining the weights of each group, we can flexibly inject them into the objective loss function \equref{equ:loss} to get the final Contribution Alignment Loss (CAL):
\begin{equation}
    \mathcal{L}_{CAL}=\mathbb{E}_{\{v_i,v_j\}}[w_{y_i,s_i}\lambda\ell(f(v_{mix}),y_i)+w_{y_j,s_j}(1-\lambda)\ell(f(v_{mix}), y_j)],
    \label{equ:ca}
\end{equation}
So far, we have achieved an ingenious combination of counterfactual node mixup and contribution alignment. They can complement and promote each other. Specifically, counterfactual node mixup can sever the spurious correlation from the causal view but it does not guarantee a good balance between different groups. While contribution alignment loss is able to balance the contribution of each group based on gradients, but it will suffer the over-fitting problem, which can be mitigated by the newly generated unbiased ego-networks. The whole training procedure of FairGB is presented in \aloref{fairgb}.

\begin{algorithm}[t]
	\caption{FairGB: Fair Graph Neural Networks via re-Balancing;}
        \label{fairgb}
	\LinesNumbered
        \small
	\KwIn{An attributed graph: $\mathcal{G}=(\mathcal{V},\mathbf{A},\mathbf{X})$; hyper-parameters: $T, T_{warm}, \eta$; graph neural network: $f_\theta$.}
	\KwOut{The Learned node representations $\mathbf{Z}$ and predictions $\mathbf{\hat{Y}}$.}
    \For{$t=1, \ldots, T$}{
        \If{$t\leq T_{warm}$}{
            $\mathcal{L}\leftarrow$Cross Entropy Loss\;
            Back-propagation to update parameters\;
        }
        \Else{
            \For{$i=1,\ldots,|\mathcal{V}_{train}|$}{
                Sample $\mu\sim\text{Uniform}(0,1)$\;
                \If{$\mu\geq\eta$}{
                    Randomly sample $v_j$ which satisfies $y_i=y_j$ and $s_i\neq s_j$;\ \ \tcp{Inter-domain}
                }
                \Else{
                    Randomly sample $v_j$ which satisfies $y_i\neq y_j$ and $s_i=s_j$;\ \ \tcp{Inter-class}
                }
                Generate mixed ego-network according to \equref{equ:node} and \equref{equ:neighbor}\;
                Compute gradients by \equref{equ:gradient}\;
            }
            Compute group weights $w_{t,s}$\;
            $\mathcal{L}_{CAL}\leftarrow$Contribution Alignment Loss (\equref{equ:ca})\;
            Back-propagation to update parameters\;
        }
    }
\end{algorithm}

\section{Experiments}
\label{sec:exp}

In this section, we conduct extensive experiments to investigate the effectiveness of our proposed model, and aim to answer the following research questions:
\begin{itemize}[leftmargin=*]
    \item \textbf{RQ1}: Compared to other baselines, can FairGB achieve better performance \emph{w.r.t.} utility and fairness?
    \item \textbf{RQ2}: How does each component affect the model performance?
    \item \textbf{RQ3}: What are the characteristics of the node features generated by FairGB compared to vanilla?
    \item \textbf{RQ4}: Can FairGB generalize well to different graph encoders?
    \item \textbf{RQ5}: How do hyper-parameters affect FairGB?
\end{itemize}

\subsection{Experimental settings}
\subsubsection{Real-World Datasets.} We conduct experiments on three widely used real-world datasets, namely German Credit, Bail, and Credit Defaulter. The statistics of the datasets can be found in \tabref{tab:dataset}. The details of the datasets are as follows:
\begin{itemize}[leftmargin=*]
    \item \textbf{German Credit} \cite{asuncion2007uci}: the nodes in the dataset are clients and two nodes are connected if they have a high similarity of the credit accounts. The task is to classify the credit risk level as high or low with the sensitive attribute "gender".
    \item \textbf{Bail} \cite{jordan2015effect}: these datasets contain defendants released on bail during 1990-2009 as nodes. The edges between the two nodes are connected based on the similarity of past criminal records and demographics. The task is to classify whether defendants are on bail or not with the sensitive attribute "race".
    \item \textbf{Credit Defaulter} \cite{yeh2009comparisons}: the nodes in the dataset are credit card users and the edges are formed based on the similarity of the payment information. The task is to classify the default payment method with the sensitive attribute "age".
\end{itemize}

\begin{table}[]
\caption{The Statistic of Datasets.}
\resizebox{1\linewidth}{!}{
\begin{tabular}{lcccccc}
\toprule
\textbf{Dataset}       & \multicolumn{2}{c}{\textbf{German Credit}} & \multicolumn{2}{c}{\textbf{Bail}} & \multicolumn{2}{c}{\textbf{Credit Defaulter}} \\ \midrule
\# Nodes      & \multicolumn{2}{c}{1,000}              & \multicolumn{2}{c}{18,876}     & \multicolumn{2}{c}{30,000}                 \\
\# Edges      & \multicolumn{2}{c}{22,242}              & \multicolumn{2}{c}{321,308}     & \multicolumn{2}{c}{152,377}                 \\
\# Attributes & \multicolumn{2}{c}{27}              & \multicolumn{2}{c}{18}     & \multicolumn{2}{c}{13}                 \\
Sens.         & \multicolumn{2}{c}{Gender}              & \multicolumn{2}{c}{Race}     & \multicolumn{2}{c}{Age}                 \\
Label         & \multicolumn{2}{c}{Credit status}              & \multicolumn{2}{c}{Bail decision}     & \multicolumn{2}{c}{Future default}                 \\ 
\bottomrule
\end{tabular}}
\label{tab:dataset}
\end{table}

\begin{table*}[]
\caption{Model performance on German, Credit, and Bail with respect to utility and fairness. Dark brown is used to highlight the best results for each metric. Light brown for the runner-up results. $\uparrow$ represents the larger, the better while $\downarrow$ represents the smaller, the better. Each experimental result is obtained from 10 repeated experiments.}
\resizebox{1\textwidth}{!}{
\begin{tabular}{lc|ccc|cc|cc|cc|c}
\toprule
\multicolumn{1}{l|}{\textbf{Datasets}}                 & \textbf{Metrics} & \textbf{GCN} & \textbf{GIN} & \textbf{SAGE} & \textbf{FairGNN} & \textbf{EDITS} & \textbf{NIFTY} & \textbf{CAF} & \textbf{RW} & \textbf{OS} & \textbf{FairGB} \\ \midrule
\multicolumn{1}{l|}{\multirow{5}{*}{\textbf{German}}} & AUC ($\uparrow$)        & 73.49{\small$\pm$2.15}    & 72.42{\small$\pm$1.46}    & \second73.78{\small$\pm$1.80}     & 65.85{\small$\pm$9.49}        & 69.76{\small$\pm$5.46}      & 72.05{\small$\pm$2.15}      & 71.87{\small$\pm$1.33}    & 71.86{\small$\pm$2.18}   & 71.73{\small$\pm$2.92}   & \best74.45{\small$\pm$4.93}     \\

\multicolumn{1}{l|}{}                        & F1 ($\uparrow$)        & 78.96{\small$\pm$3.35}    & 81.76{\small$\pm$1.27}    & 81.01{\small$\pm$1.18}     & \second82.29{\small$\pm$0.32}        & 81.04{\small$\pm$1.09}      & 79.20{\small$\pm$1.19}      & 82.16{\small$\pm$0.22}    & 81.52{\small$\pm$0.77}   & 82.01{\small$\pm$0.65}   & \best82.58{\small$\pm$0.35}     \\

\multicolumn{1}{l|}{}                        & ACC ($\uparrow$)        & 70.84{\small$\pm$2.56}    & \best72.36{\small$\pm$0.77}    & \second71.76{\small$\pm$1.15}     & 70.64{\small$\pm$0.74}        & 71.68{\small$\pm$1.25}      & 69.60{\small$\pm$1.50}      & 70.64{\small$\pm$0.34}    & 70.44{\small$\pm$0.87}   & 70.00{\small$\pm$1.72}   & 70.76{\small$\pm$0.79}     \\

\multicolumn{1}{l|}{}                        & $\Delta_{sp}$ ($\downarrow$)        & 34.95{\small$\pm$15.33}    & 16.65{\small$\pm$8.41}    & 28.17{\small$\pm$4.99}     & 7.65{\small$\pm$8.07}        & 8.42{\small$\pm$7.35}      & 7.74{\small$\pm$7.80}      & 6.60{\small$\pm$1.66}    & 4.80{\small$\pm$4.50}   & \second4.02{\small$\pm$3.87}   & \best2.19{\small$\pm$1.89}     \\

\multicolumn{1}{l|}{}                        & $\Delta_{eo}$ ($\downarrow$)        & 28.43{\small$\pm$11.66}    & 11.79{\small$\pm$7.29}    & 19.40{\small$\pm$6.11}     & 4.18{\small$\pm$4.86}        & 5.69{\small$\pm$2.16}      & 5.17{\small$\pm$2.38}      & \second1.58{\small$\pm$1.14}    & 2.24{\small$\pm$2.64}   & 2.66{\small$\pm$3.30}   & \best1.20{\small$\pm$1.37}     \\ \midrule

\multicolumn{1}{l|}{\multirow{5}{*}{\textbf{Bail}}}   & AUC ($\uparrow$)        & 87.09{\small$\pm$0.18}    & 85.68{\small$\pm$0.25}    & 92.07{\small$\pm$0.83}     & 91.53{\small$\pm$0.38}        & 89.07{\small$\pm$2.26}      & 92.04{\small$\pm$0.89}      & 91.39{\small$\pm$0.34}    & 91.54{\small$\pm$0.47}   & \second93.37{\small$\pm$0.54}   & \best96.21{\small$\pm$1.26}     \\

\multicolumn{1}{l|}{}                        & F1 ($\uparrow$)        & 77.93{\small$\pm$0.39}    & 76.92{\small$\pm$0.31}    & 83.64{\small$\pm$0.95}     & 82.55{\small$\pm$0.98}        & 77.83{\small$\pm$3.79}      & 77.81{\small$\pm$6.03}      & 83.09{\small$\pm$0.98}    & 82.73{\small$\pm$0.96}   & \second85.60{\small$\pm$1.20}   & \best90.19{\small$\pm$1.76}     \\

\multicolumn{1}{l|}{}                        & ACC ($\uparrow$)        & 83.58{\small$\pm$0.31}    & 82.34{\small$\pm$0.29}    & 88.61{\small$\pm$0.68}     & 87.68{\small$\pm$0.73}        & 84.42{\small$\pm$2.87}      & 84.11{\small$\pm$5.49}      & 88.02{\small$\pm$0.86}    & 87.20{\small$\pm$1.04}   & \second89.36{\small$\pm$1.13}   & \best92.64{\small$\pm$1.21}     \\

\multicolumn{1}{l|}{}                        & $\Delta_{sp}$ ($\downarrow$)        & 7.59{\small$\pm$0.31}    & 8.78{\small$\pm$0.40}    & 3.84{\small$\pm$1.15}     & 1.94{\small$\pm$0.82}        & 3.74{\small$\pm$3.54}      & 5.74{\small$\pm$0.38}      & 2.29{\small$\pm$1.06}    & \best0.68{\small$\pm$0.59}   & 1.70{\small$\pm$1.76}   & \second0.77{\small$\pm$0.52}     \\

\multicolumn{1}{l|}{}                        & $\Delta_{eo}$ ($\downarrow$)        & 5.26{\small$\pm$0.30}    & 7.47{\small$\pm$0.39}    & 2.64{\small$\pm$1.31}     & 1.72{\small$\pm$0.70}        & 4.46{\small$\pm$3.50}      & 4.07{\small$\pm$1.28}      & \second1.17{\small$\pm$0.52}    & \best0.98{\small$\pm$0.74}   & 1.53{\small$\pm$1.01}   & 1.53{\small$\pm$0.65}     \\ \midrule

\multicolumn{1}{l|}{\multirow{5}{*}{\textbf{Credit}}} & AUC ($\uparrow$)        & 73.90{\small$\pm$0.03}    & 73.20{\small$\pm$0.02}    & 74.55{\small$\pm$0.60}     & 70.82{\small$\pm$0.74}        & \best75.04{\small$\pm$0.12}      & 72.89{\small$\pm$0.44}      & 73.42{\small$\pm$1.89}    & 73.38{\small$\pm$0.56}   & \second74.76{\small$\pm$0.17}   & 73.07{\small$\pm$1.79}     \\

\multicolumn{1}{l|}{}                        & F1 ($\uparrow$)        & 81.94{\small$\pm$0.01}    & 83.18{\small$\pm$0.10}    & 83.95{\small$\pm$1.19}     & 83.97{\small$\pm$2.00}        & 82.41{\small$\pm$0.52}      & 82.60{\small$\pm$1.25}      & 83.63{\small$\pm$0.89}    & 83.43{\small$\pm$1.19}   & \second86.91{\small$\pm$0.70}   & \best87.47{\small$\pm$0.52}     \\

\multicolumn{1}{l|}{}                        & ACC ($\uparrow$)        & 73.69{\small$\pm$0.01}    & 75.01{\small$\pm$0.12}    & 75.82{\small$\pm$1.04}     & 75.29{\small$\pm$1.62}        & 74.13{\small$\pm$0.59}      & 74.39{\small$\pm$1.35}      & 75.36{\small$\pm$0.95}    & 75.22{\small$\pm$1.36}   & \second79.14{\small$\pm$0.77}   & \best79.38{\small$\pm$0.52}     \\

\multicolumn{1}{l|}{}                        & $\Delta_{sp}$ ($\downarrow$)        & 12.73{\small$\pm$0.15}    & 5.41{\small$\pm$0.43}    & 15.91{\small$\pm$2.83}     & 6.17{\small$\pm$5.57}        & 11.34{\small$\pm$6.36}      & 10.65{\small$\pm$1.65}      & 8.63{\small$\pm$2.13}    & 4.63{\small$\pm$1.41}   & \second3.85{\small$\pm$2.15}   & \best1.61{\small$\pm$1.41}     \\

\multicolumn{1}{l|}{}                        & $\Delta_{eo}$ ($\downarrow$)        & 10.53{\small$\pm$0.13}    & 3.45{\small$\pm$0.41}    & 13.45{\small$\pm$3.17}     & 5.06{\small$\pm$4.46}        & 9.38{\small$\pm$5.39}      & 8.10{\small$\pm$1.91}      & 6.85{\small$\pm$1.55}    & 2.84{\small$\pm$0.99}   & \second2.23{\small$\pm$1.13}   & \best0.92{\small$\pm$0.83}     \\ \midrule

\multicolumn{2}{c|}{\textbf{Avg. (Rank)}}                              & 8.13    & 7.00    & 5.47     & 5.47        & 7.00      & 7.40      & 4.67    & 4.60   & \second3.27   & \best2.00     \\ \bottomrule
\end{tabular}}
\label{tab:compare}
\end{table*}

\subsubsection{Baselines.} We compare our proposed model with 9 representative and state-of-the-art methods in four categories, which include: (1) \textbf{Vanilla graph neural networks}: GCN \cite{kipf2016semi} is widely used spectral GNN; GraphSAGE (SAGE for short) \cite{hamilton2017inductive} is a method for inductive learning that leverages node feature information to generate embedding for nodes in large graph; GIN \cite{xu2018powerful} is a graph-based neural network that can capture different topological structures by injecting the node's identity into its aggregation function. (2) \textbf{Fair node classification methods}: FairGNN \cite{dai2021say} uses adversarial training to achieve fairness on graphs; EDITS \cite{dong2022edits} is a pre-processing method for fair graph learning. (3) \textbf{Graph counterfactual fairness methods}: NIFTY \cite{agarwal2021towards} simply performs a flipping on the sensitive attributes to get counterfactual data; CAF \cite{guo2023towards} is guided by causal analysis, which can select counterfactual from training data to avoid non-realistic counterfactuals and adopt selected counterfactuals to learn fair node representations. (4) \textbf{Simple re-balancing methods}: Re-weighting (RW for short) up-weights the contribution of minority groups and down-weights the contribution of majority groups to the loss functions according to the quantity of groups. More concretely, the objective loss function is $\mathcal{L}=\frac{1}{N}\sum_{i=1}^N\frac{N}{|\mathcal{D}_{y_i,s_i}|}\ell(f(v_i), y_i)$.
Over-sampling (OS for short) repeatedly samples minority group samples until the number of each group data reaches the maximum number of group data. We duplicate the edges of the original node when adding an oversampled node to the original graph. Following the setting of CAF \cite{guo2023towards}, we also use SAGE as the model backbone except for GCN and GIN.

\subsubsection{Evaluation Metrics.} We regard AUC, F1 score, and accuracy as utility metrics. For fairness metrics, we use statistical parity (SP) $\Delta_{sp}$ and equal opportunity (EO) $\Delta_{eo}$, a smaller fairness metric indicates a fairer model decision.

\subsubsection{Implementation details.} For German, Bail, and Credit datasets, we follow train/valid/test split in \cite{agarwal2021towards}. The hyper-parameters used in experiments follow the source codes or are searched by the grid search method, and we use the Adam optimization algorithm \cite{kingma2014adam} to train all the models. Specifically, FairGB only has one additional hyper-parameter $\eta$, and it is searched from \{0, 0.1, 0.2, $\ldots$, 0.8, 0.9, 1\}. All the models are implemented in PyTorch \cite{paszke2019pytorch} version 2.0.1 with PyTorch Geometric \cite{fey2019fast} version 2.3.1.

\subsection{RQ1: Performance comparison}
To comprehensively understand the effectiveness of FairGB, we conduct node classification on three widely used datasets. The experimental results of utility and fairness of each model are shown in \tabref{tab:compare}. From the \tabref{tab:compare}, our observations can be threefold: (1) We can observe that simple re-balancing methods achieve satisfactory results in both utility and fairness on three datasets. They can obtain competitive or even superior performance compared to carefully designed fair GNN models. (2) FairGB consistently achieves the best performance on the utility-fairness trade-off on all datasets. We use the average rank of three utility metrics and two fairness metrics to understand the performance of the trade-off. Our model ranks 2.47 and the runner-up model ranks 3.27. Our model outperforms all four categories of baselines, which shows the effectiveness of our model. (3) We find that OS and FairGB, on bail and credit datasets, do not sacrifice utility while improving fairness. For instance, on the bail dataset, FairGB improves the AUC by 4.50\%, the F1 by 7.83\%, and Acc by 4.54\% compared to the vanilla model. And on the credit dataset, FairGB improves F1 by 4.19\%, and Acc by 4.70\% compared to the vanilla model. We speculate that this may be due to two reasons: (i) Balance of contributions from each group leads to a significant improvement in the accuracy of the worst group; (ii) the re-balancing strategy enhances the model's generalization, resulting in an overall improvement in classification performance.

\begin{figure*}[t]
\centering
\includegraphics[width=\textwidth]{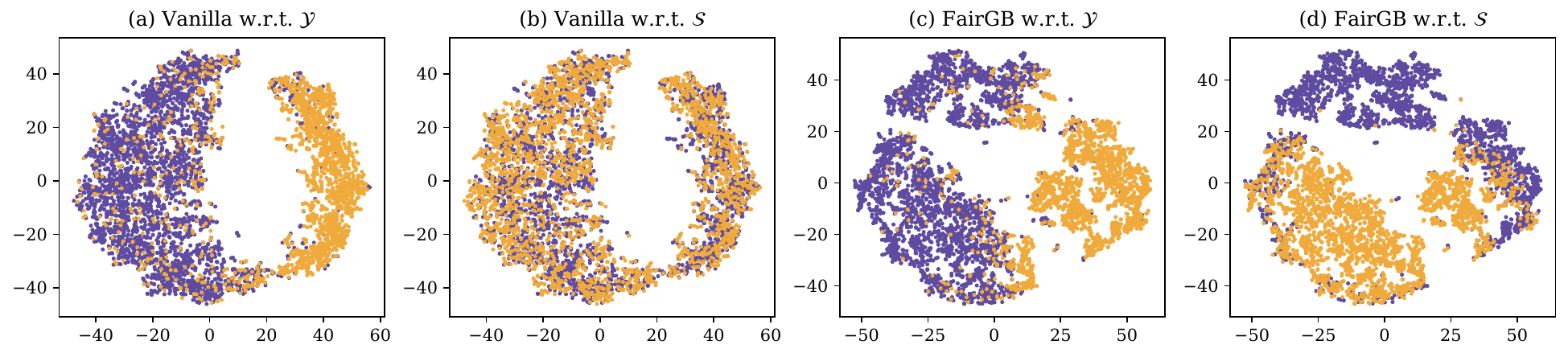}
\caption{Visualizations of node representation learned on the Bail dataset.}
\label{fig:representation}
\end{figure*}

\subsection{RQ2: Ablation study}
To answer \textbf{RQ2} and verify the effectiveness of our proposed FairGB, we construct two variants of FairGB: (1) without Contribution Alignment Loss but conduct node mixup within counterfactual pairs (called FairGB w/o CAL); (2) without Counterfactual Node Mixup but assigns weights based on the group contribution (called FairGB w/o CNM). \tabref{tab:ablation} demonstrates the performance of the vanilla model, two variants, and FairGB. First, we observe that two variants perform worse than FairGB in the trade-off of utility and fairness, which proves the effectiveness of each component and the rationality of the combination. Second, we find that FairGB w/o CNM consistently achieves better fairness compared to FairGB w/o CAL, which indicates the importance of re-balancing in fair graph learning. However, as mentioned above, re-weighting methods will encounter over-fitting problems, and counterfactual node mixup will generate new samples per epoch, which could mitigate over-fitting. Third, while both modules effectively enhance fairness, the impact on utility varies across different datasets for each variant. For example, FairGB w/o CAL achieves high utility on the bail dataset but does not show improvement on the credit dataset, whereas FairGB w/o CNM exhibited the opposite trend. We speculate that this may be due to the different sizes of labeled nodes in the two datasets. The bail dataset has a small number of training samples (only 100 nodes are labeled), leading to over-fitting issues with re-weighting. Mixup, on the other hand, is beneficial for training with few labels. In the credit dataset, there are a lot of training samples (4000 nodes are labeled), and the phenomenon is just the opposite. However, we can observe that FairGB competently incorporates the strengths of each module, demonstrating that the two modules can mutually reinforce each other.

\begin{table}[]
\caption{Ablation study results.}
\resizebox{1\linewidth}{!}{
\begin{tabular}{l|c|cccc}
\toprule
\multirow{2}{*}{\textbf{Datasets}} & \multirow{2}{*}{\textbf{Metrics}} & \multirow{2}{*}{\textbf{Vanilla}} & \textbf{FairGB} & \textbf{FairGB}  & \multirow{2}{*}{\textbf{FairGB}} \\
                          &                          &                          & \textbf{w/o CAL} & \textbf{w/o CNM} &                         \\ \midrule
\multirow{5}{*}{\textbf{German}}   & AUC ($\uparrow$)                     & 73.72\textcolor{color2}{\scriptsize{-0.72\%}}                         & \textbf{75.09}\textcolor{color1}{\scriptsize{+0.64\%}}       & 69.29\textcolor{color2}{\scriptsize{-5.16\%}}        & 74.45                        \\
                          & F1 ($\uparrow$)                      & 81.01\textcolor{color2}{\scriptsize{-1.57\%}}                         & 82.46\textcolor{color2}{\scriptsize{-0.12\%}}       & 81.38\textcolor{color2}{\scriptsize{-1.20\%}}        & \textbf{82.58}                        \\
                          & ACC ($\uparrow$)                     & 71.76\textcolor{color1}{\scriptsize{+1.00\%}}                         & \textbf{71.80}\textcolor{color1}{\scriptsize{+1.04\%}}       & 70.36\textcolor{color2}{\scriptsize{-0.40\%}}        & 70.76                       \\
                          & $\Delta_{sp}$ ($\downarrow$)                       & 28.17\textcolor{color2}{\scriptsize{+25.98\%}}                         & 7.14\textcolor{color2}{\scriptsize{+4.95\%}}       & 4.10\textcolor{color2}{\scriptsize{+1.91\%}}        & \textbf{2.19}                        \\
                          & $\Delta_{eo}$ ($\downarrow$)                       & 19.40\textcolor{color2}{\scriptsize{+18.20\%}}                         & 2.77\textcolor{color2}{\scriptsize{+1.57\%}}       & 2.77\textcolor{color2}{\scriptsize{+1.57\%}}        & \textbf{1.20}                        \\ \midrule
\multirow{5}{*}{\textbf{Bail}}     & AUC ($\uparrow$)                     & 92.07\textcolor{color2}{\scriptsize{-4.14\%}}                         & \textbf{97.11}\textcolor{color1}{\scriptsize{+0.90\%}}       & 91.13\textcolor{color2}{\scriptsize{-5.08\%}}        & 96.21                        \\
                          & F1 ($\uparrow$)                      & 83.64\textcolor{color2}{\scriptsize{-6.55\%}}                         & \textbf{91.93}\textcolor{color1}{\scriptsize{+1.74\%}}       & 82.36\textcolor{color2}{\scriptsize{-7.83\%}}        & 90.19                        \\
                          & ACC ($\uparrow$)                     & 88.61\textcolor{color2}{\scriptsize{-4.03\%}}                         & \textbf{93.99}\textcolor{color1}{\scriptsize{+1.35\%}}       & 87.19\textcolor{color2}{\scriptsize{-5.45\%}}        & 92.64                        \\
                          & $\Delta_{sp}$ ($\downarrow$)                       & 3.84\textcolor{color2}{\scriptsize{+3.07\%}}                         & 1.36\textcolor{color2}{\scriptsize{+0.59\%}}       & 1.33\textcolor{color2}{\scriptsize{+0.56\%}}        & \textbf{0.77}                        \\
                          & $\Delta_{eo}$ ($\downarrow$)                       & 2.64\textcolor{color2}{\scriptsize{+1.11\%}}                         & 1.82\textcolor{color2}{\scriptsize{+0.29\%}}       & \textbf{1.06}\textcolor{color1}{\scriptsize{-0.47\%}}        & 1.53                       \\ \midrule
\multirow{5}{*}{\textbf{Credit}}   & AUC ($\uparrow$)                     & \textbf{74.55}\textcolor{color1}{\scriptsize{+1.48\%}}                         & 73.61\textcolor{color1}{\scriptsize{+0.54\%}}       & 73.77\textcolor{color1}{\scriptsize{+0.70\%}}        & 73.07                        \\
                          & F1 ($\uparrow$)                      & 83.95\textcolor{color2}{\scriptsize{-3.52\%}}                         & 82.86\textcolor{color2}{\scriptsize{-4.61\%}}       & 86.18\textcolor{color2}{\scriptsize{-1.29\%}}        & \textbf{87.47}                        \\
                          & ACC ($\uparrow$)                     & 75.82\textcolor{color2}{\scriptsize{-3.56\%}}                         & 74.48\textcolor{color2}{\scriptsize{-4.90\%}}       & 78.19\textcolor{color2}{\scriptsize{-1.19\%}}        & \textbf{79.38}                        \\
                          & $\Delta_{sp}$ ($\downarrow$)                       & 15.91\textcolor{color2}{\scriptsize{+12.59\%}}                         & 6.84\textcolor{color2}{\scriptsize{+3.52\%}}       & 5.76\textcolor{color2}{\scriptsize{+2.44\%}}        & \textbf{3.32}                        \\
                          & $\Delta_{eo}$ ($\downarrow$)                       & 13.45\textcolor{color2}{\scriptsize{+11.96\%}}                         & 4.15\textcolor{color2}{\scriptsize{+2.66\%}}       & 3.86\textcolor{color2}{\scriptsize{+2.37\%}}        & \textbf{1.49}                        \\ \bottomrule
\end{tabular}}
\label{tab:ablation}
\vspace{-0.2cm}
\end{table}

\subsection{RQ3: Visualization}
In order to answer the \textbf{RQ3}, we visualize the learned node embeddings on the bail dataset to better understand the mechanism of FairGB. We compare our FairGB with the vanilla GNN model (\emph{i.e.} SAGE), and then use 16-dimensional output embedding of the encoder. Subsequently, we use t-SNE \cite{van2008visualizing} to map the 16-dimensional embedding into 2-dimensional space for visualization. We only select samples in the test set for better visibility. The results are shown in \figref{fig:representation}. We plot two figures for each model, one concerns the target labels, and the other one concerns the sensitive attributes. Comparing \figref{fig:representation}(a) and \figref{fig:representation}(c), we can observe that the representations of nodes from different classes have smaller overlapping regions in FairGB. This confirms the improvement of FairGB on the three utility metrics. Next, since SAGE already exhibits good fairness on the bail dataset (as shown in \tabref{tab:compare}), we can observe that the node representations of the two sensitive attributes are mixed together in \figref{fig:representation}(b). Because FairGB is designed from a re-balancing perspective, the contributions from each group are relatively balanced, making the node representations of the two sensitive attributes distinguishable in FairGB. However, we can observe that the classification boundary for the target label is orthogonal to the classification boundary for the sensitive features. This explains how FairGB achieves excellent fairness.

\subsection{RQ4: Generalization for different encoders}

To answer the \textbf{RQ4}, we test the generalization ability of our FairGB by deploying it on three different graph encoders: SAGE, GCN, and GIN. The utility and fairness performance are demonstrated in \figref{fig:encoder}. We can observe that FairGB is able to maintain or even surpass the vanilla models in both AUC and F1-score metrics, which is thanks to the re-balancing strategy. In the fairness metrics, we find that FairGB effectively reduces $\Delta_{sp}$ and $\Delta_{eo}$ compared to the vanilla models across all datasets. These results indicate that FairGB has strong generalization capabilities for different graph encoders, making it flexible for use in real-world applications.

\subsection{RQ5: Parameter sensitive analysis}

One major advantage of FairGB is that it only has one additional hyper-parameter $\eta$ to control the trade-off between inter-domain mixup and inter-class mixup. To figure out the \textbf{RQ5}, we conduct hyper-parameter sensitive analysis on three datasets in terms of three utility metrics and two fairness metrics. As shown in \figref{fig:hyper}, we vary $\eta$ from 0 to 1 and present the performance of FairGB. When $\eta$ equals 0, the model only performs inter-domain mixup, and when $\eta$ equals 1, the model only performs inter-class mixup. Recall our above analysis, when the training set is class-balanced, inter-domain mixup is equivalent to group-wise balance re-sampling. But according to our causal view, we demonstrate that plain inter-domain mixup can not effectively mitigate bias, so we conduct inter-class mixup and add a hyper-parameter $\eta$ to control the trade-off. We can observe that when $\eta$ equals 0 or 1, FairGB can not achieve the best performance. However, when $\eta$ is within the range of 0.3 to 0.8, the model is not sensitive to parameters and FairGB achieves the roughly best performance when $\eta$ equals 0.5. Therefore, we simply fix the $\eta$ to 0.5 in most experiments.
\section{Related Work}
\label{sec:related}

\begin{figure}[t]
\centering
\includegraphics[width=\linewidth]{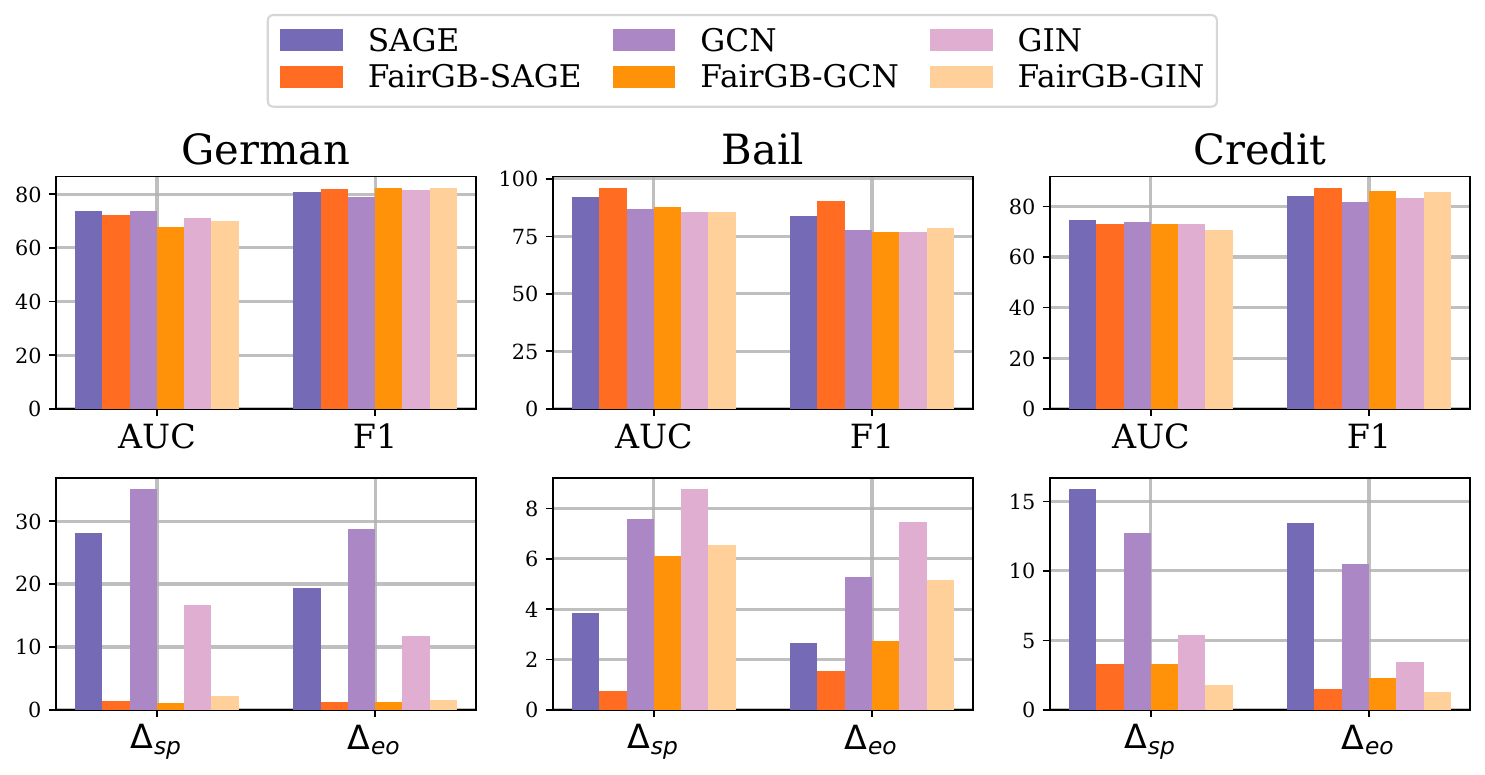}
\vspace{-3mm}
\caption{Comparison of the utility and fairness performance with different graph encoders.}
\label{fig:encoder}
\end{figure}

\begin{figure}[t]
\centering
\vspace{-3mm}
\includegraphics[width=\linewidth]{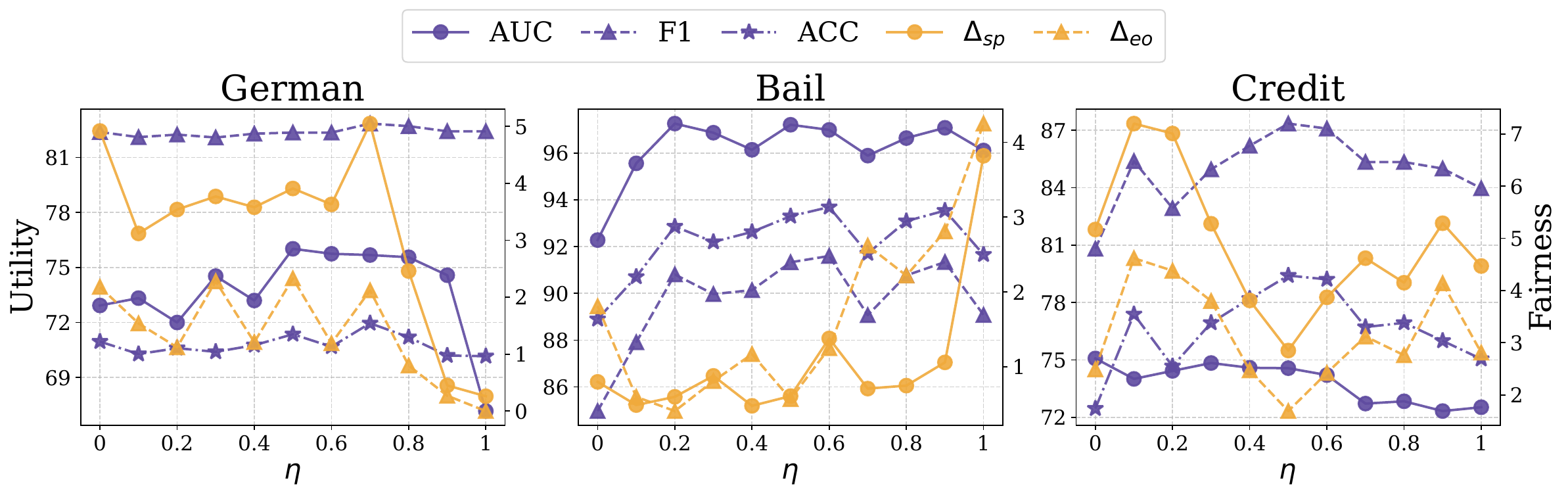}
\vspace{-3mm}
\caption{Parameter sensitivity results \emph{w.r.t.} $\eta$.}
\vspace{-3mm}
\label{fig:hyper}
\end{figure}

\subsection{Fairness in Graph Neural Networks}

Fairness is a widely researched issue in the field of machine learning \cite{mehrabi2021survey, berk2019accuracy, beutel2017data, chen2018my}. Most works in deep learning exclusively focus on optimizing the model utility while ignoring the fairness of the decisions, such as group fairness \cite{diana2021minimax, beutel2017data}, individual fairness \cite{mukherjee2020two, sharifi2019average}, and counterfactual fairness \cite{kusner2017counterfactual, chiappa2019path}. Since GNNs inherit characteristics from machine learning, they can also encounter fairness issues, which makes it challenging to deploy GNNs in high-risk applications. 

Nowadays, there are various existing works that try to improve the fairness of GNNs, and they can be roughly categorized into pre-, in-, and post-processing methods. Pre-processing methods modify the original training data with the assumption that fair data would result in fair models. Fairwalk \cite{rahman2019fairwalk} and Crosswalk \cite{khajehnejad2022crosswalk} choose each group of neighbor nodes with an equal chance and bias random walks to cross group boundaries. EDITS \cite{dong2022edits} designs a model-agnostic method to modify the attribute and structure for fair GNNs training. In-processing approaches aim to mitigate unfairness during the training process by directly modifying the learning algorithm, and they can be divided into three parts: adversarial-based, augmentation-based, and message-passing-based. Adversarial-based approaches train fair GNNs by preventing an adversary from correctly predicting sensitive attributes from the learned node representations. FairGNN \cite{dai2021say} and FairVGNN \cite{wang2022improving} enforce the model to generate fair outputs with adversarial training through the min-max objective. Augmentation-based approaches generate counterfactual views and minimize the discrepancy with the original view. NIFTY \cite{agarwal2021towards} flips sensitive attributes for each node to obtain the counterfactual view. Message-passing-based approaches improve fairness through the view of optimization problem with smoothness regularization. FMP \cite{jiang2022fmp} proposes a fair message-passing framework by considering graph smoothness and fairness objectives. The post-processing techniques directly calibrate the classifier's decisions at inference time. P{\small OST}P{\small ROCESS} \cite{merchant2023disparity} updates model predictions based on a black-box policy to minimize differences between demographic groups.

\subsection{Re-balancing in Graph Neural Networks}

Due to the GNNs inheriting the character of deep neural networks, GNNs perform with biases toward the majority classes when training on imbalanced datasets. To overcome this challenge, class-imbalanced learning on graphs has emerged as a promising solution that combines the strengths of graph representation learning and class-imbalanced learning. A great branch of these methods is over-sampling minority nodes by data augmentation to balance the skew label distribution. GraphSMOTE \cite{zhao2021graphsmote} leverages representative data augmentation method (\emph{i.e.}, SMOTE \cite{fernandez2018smote}) and proposes edge predictor to fuse augmented nodes into the original graph. GraphENS \cite{park2022graphens} discovers neighbor memorization phenomenon in imbalanced node classification, and generates minority nodes by synthesizing ego-networks according to similarity. GraphSHA \cite{li2023graphsha} only synthesizes harder training samples and proposes S{\small EMI}M{\small IXUP} to block message propagation from minority nodes to neighbor classes by generating connected edges from 1-hop subgraphs. Another branch of class-imbalanced learning on graphs is topology-aware logit adjustment. TAM \cite{song2022tam} adjusts margins node-wisely according to the extent of deviation from connectivity patterns to avoid inducing false positives of minority nodes.
Different from the methods mentioned above that focus on class imbalance, our paper attempts to address the issue of group imbalance. We emphasize that group imbalance is a significant source of unfairness in GNNs and, as a result, design group re-balancing strategies to enhance the fairness of graph learning.
\section{Conclusion}
\label{sec:conclusion}

In this paper, we provide a new perspective to address the unfairness in graph neural networks. We find that group distribution imbalance is a significant source of bias and simple re-balancing methods (\emph{e.g.}, re-weighting and re-sampling) can easily achieve competitive or even superior performance compared to existing state-of-the-art fair GNNs. To this end, we propose FairGB which consists of two modules to automatically balance the contributions of each group. Guided by theoretical analysis, we conduct linear interpolation between counterfactual node pairs to effectively mitigate bias. In order to further enhance fairness, we propose contribution alignment loss based on gradients and flexibly combine two modules. Experimental results demonstrate the effectiveness of our proposed FairGB in achieving state-of-the-art performance on three real-world datasets. Future research directions can delve more into understanding unfairness from a re-balancing perspective and better integration of the graph properties to mitigate bias.

\section*{Acknowledge}

This work was partially supported by the Research Grants Council of Hong Kong, No. 14202919 and No. 14205520.




\bibliographystyle{ACM-Reference-Format}
\bibliography{FairGB.bib}

\appendix
\end{document}